\pdfoutput=1
\documentclass[sigconf]{acmart}

\usepackage{subcaption} 
\usepackage[linesnumbered,ruled,vlined]{algorithm2e}

\newtheorem{assumption}{Assumption}

\newtheorem{theorem}{Theorem}

\allowdisplaybreaks
\DeclareMathOperator*{\argmin}{arg\,min}

\copyrightyear{2026}
\acmYear{2026}
\setcopyright{rightsretained}
\acmConference[MobiHoc '26]{The Twenty-seventh International Symposium on Theory, Algorithmic Foundations, and Protocol Design for Mobile Networks and Mobile Computing}{November 23--26, 2026}{Tokyo, Japan}
\acmBooktitle{The Twenty-seventh International Symposium on Theory, Algorithmic Foundations, and Protocol Design for Mobile Networks and Mobile Computing (MobiHoc '26), November 23--26, 2026, Tokyo, Japan}
\acmDOI{10.1145/3842721.3849640}
\acmISBN{979-8-4007-2975-1/2026/11}

\ccsdesc[500]{Networks~Network control algorithms}

\begin{document}

\title{Foundation Model-Aided Multi-Agent Reinforcement Learning for Wireless Random Access Network Optimization}

\author{Myeung Suk Oh}
\email{oh.746@osu.edu}
\affiliation{
  \institution{The Ohio State University}
  \city{Columbus}
  \state{Ohio}
  \country{USA}
}

\author{Zhiyao Zhang}
\email{zhang.15178@osu.edu}
\affiliation{
  \institution{The Ohio State University}
  \city{Columbus}
  \state{Ohio}
  \country{USA}
}

\author{Alvaro Velasquez}
\email{alvaro.velasquez@colorado.edu}
\affiliation{
  \institution{University of Colorado Boulder}
  \city{Boulder}
  \state{Colorado}
  \country{USA}
}

\author{Nathaniel D. Bastian }
\email{nbastian@pathfinderappliedresearch.com}
\affiliation{
  \institution{Pathfinder Applied Research, LLC}
  \city{Ashburn}
  \state{Virginia}
  \country{USA}
}

\author{Jia Liu}
\email{liu@ece.osu.edu}
\affiliation{
  \institution{The Ohio State University}
  \city{Columbus}
  \state{Ohio}
  \country{USA}
}

\begin{abstract}
    Random access (RA) is one of the most foundational medium access control (MAC) layer scheduling schemes for handling unpredictable data traffic from multiple terminals. While multi-agent reinforcement learning (MARL) has been explored to optimize RA-based wireless networks, its reliance on experience-driven, distributed policy learning incurs significant training overhead for each optimization task, limiting its feasibility in real-world applications. In this work, we propose to leverage a foundation model (FM) to improve MARL efficiency across diverse RA network optimization tasks. Specifically, we design an FM-aided actor-critic algorithm within a consensus-based decentralized MARL architecture and provide its convergence analysis under local reward exchanges and nonlinear value function approximations to show that our algorithm achieves the same convergence order as the conventional MARL with critic model exchanges and linear approximations. Our numerical results show that our FM-based approach significantly enhances MARL speed for RA network optimization.
\end{abstract}

\keywords{Foundation model, multi-agent reinforcement learning (MARL), random access, self-supervised learning, wireless networks}

\maketitle

\section{Introduction}\label{sec:intro}

{\bf 1) Background and Motivations:}
Random Access (RA) techniques at the Medium Access Control (MAC) layer are fundamental to modern wireless standards, including Wi-Fi and LTE Licensed Assisted Access~\cite{Chen16}.
In RA-based networks (Figure~\ref{fig:system_diagram}), multiple devices contend for a shared communication channel rather than relying on centralized scheduling.
By allowing devices to independently determine transmission opportunities, RA eliminates the overhead associated with pre-allocated channel access.
This decentralized paradigm is particularly well suited for the Internet of Things (IoT)~\cite{Aouedi24}, machine-type communications (MTC)~\cite{Mahmood21}, and smart grids~\cite{Rodriguez23}, where low-complexity and scalable mechanisms are required to manage highly sporadic and unpredictable traffic.

Despite these advantages, the primary challenge in RA is maximizing network efficiency while mitigating collisions.
Excessive contention leads to wasted communication resources, often diminishing the benefits of decentralized access.
Although extensive research has sought to improve the performance of RA-based networks, existing approaches either rely on heuristic methods without performance guarantees or adopt idealized analytical models that struggle to capture the complexities of real-world environments.

\begin{figure}[!t]
    \centering
    \includegraphics[width=0.9\linewidth]{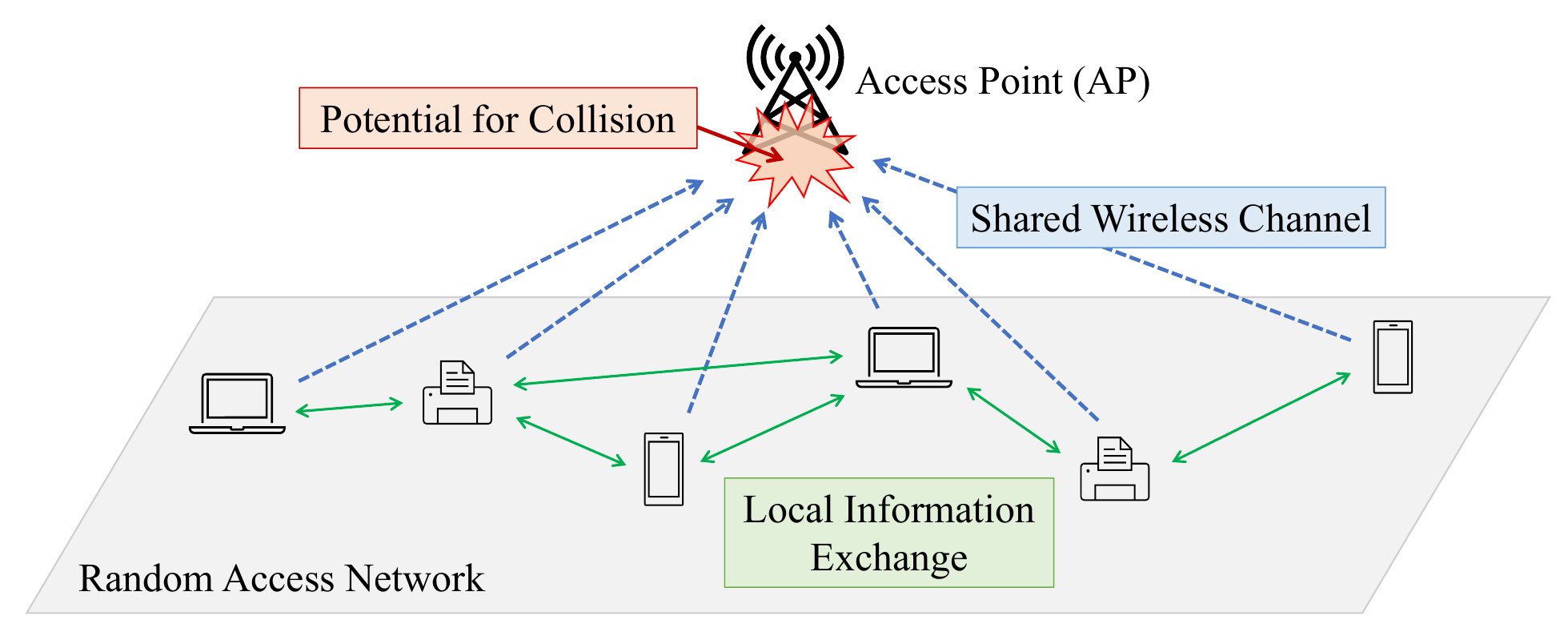}
    \vspace{-3mm}
    \caption{An RA-based wireless network, where devices contend for channel access. Each device makes its own decision on when to transmit. Communication links are established across the devices for exchanging local information.}
    \Description{.}
    \label{fig:system_diagram}
    \vspace{-5mm}
\end{figure}

\vspace{.5mm}
{\bf 2) Foundation Model-Aided Multi-Agent Reinforcement Learning:}
Recent advances in machine learning have introduced new opportunities for optimizing RA networks~\cite{Ahmad20,Kulin21,Cao21}.
Due to their distributed nature, RA networks can naturally be modeled as multi-agent reinforcement learning (MARL) systems.
Many existing approaches~\cite{Han20,Lee24,Yu19,Yu22,Zhang20,Guo22,He24} utilize the centralized training and decentralized execution (CTDE) paradigm~\cite{Amato24} to maintain learning stability.
However, CTDE-based methods face fundamental limitations.
Their reliance on a central training entity restricts deployment in large-scale or infrastructure-constrained settings.
Even when such an entity exists, communication overhead on aggregating observations and actions scales poorly with the number of devices.
Also, centralization can introduce single points of failure and raise concerns regarding privacy and adversarial vulnerability.

Furthermore, conventional MARL methods often suffer from task-specific overfitting.
Models designed for narrow objectives (e.g., collision minimization) typically require retraining or architectural modifications when adapting to new performance metrics, resulting in poor sample efficiency and limited generalization.
These limitations motivate the need for MARL frameworks that are both fully decentralized and adaptable to diverse optimization objectives.

Meanwhile, foundation models (FMs) pretrained on large-scale datasets and then finetuned for downstream tasks offer a promising path for reinforcement learning (RL).
By modeling system dynamics as sequence processes (e.g., using architectures like MAMBA~\cite{Gu23}), self-supervised FMs can learn {\em reward-agnostic} representations that allow faster policy adaptation and improved generalization across tasks.
While there is growing interest in applying FMs to wireless systems~\cite{Du24,Fontaine24,Chen24,Yang25_2}, several open challenges remain.
Particularly, it is unclear how to effectively integrate FMs into established RL frameworks (e.g., actor–critic methods) with theoretical performance guarantees, and how to design MARL algorithms that exploit the representational power of FMs without introducing centralized dependencies.
This motivates us to pursue FM-aided MARL for wireless random access network optimization.

\vspace{.5mm}
{\bf 3) Our Approach and Contributions:} 
In this work, we propose \textit{an FM-aided MARL framework} for optimizing RA-based wireless networks.
To ensure practical viability, we adopt a fully decentralized paradigm based on decentralized training and decentralized execution (DTDE).
To maintain stability and guarantee convergence without a central coordinator, we incorporate an average consensus mechanism~\cite{Boyd06}, allowing devices to synchronize via local peer-to-peer communication.
Our proposed framework employs a specialized actor-critic (AC) design, where a pretrained FM serves as \textit{a reward-agnostic backbone} for the critic network.
By attaching a {\em task-specific head} to this backbone, the critic benefits from expressive latent representations of system dynamics learned through {\em self-supervised} pretraining on reward-independent data.
This integration accelerates online policy learning while achieving performance comparable to or better than end-to-end MARL training.
Our main contributions are summarized as follows.

\begin{list}{\labelitemi}{\leftmargin=1.1em \itemindent=-0.0em \itemsep=.2em}
    \item We consider a {\em consensus-driven, fully decentralized MARL} that eliminates the need for global controllers.
    Relying on local exchanges, the design is well suited for scenarios where centralized control is infeasible or undesirable.
    Unlike~\cite{Oh2025}, we consider three RA optimization objectives, namely \textit{fair age of information (AoI)}, \textit{maximum sum-rate}, and \textit{fair rate}, and formulate unified Markov decision process (MDP) models with shared observation and action spaces, as well as locally computable reward functions.

    \item We introduce a {\em new} method for integrating FMs into the actor–critic pipeline.
    Inspired by the SMART architecture~\cite{Sun23}, we pretrain an FM for forward-dynamics prediction and incorporate it as a reward-agnostic backbone within the critic.
    The resulting latent representations significantly enhance the critic’s ability to evaluate rewards during online learning.
    
    \item Unlike \cite{Sun23}, which is purely empirical, we provide a finite-time convergence analysis of our FM-aided decentralized actor–critic algorithm.
    Our analysis addresses consensus applied to local rewards and nonlinear state-value approximation, offering {\em theoretical} insights that differentiate this work from existing decentralized MARL literature.
    Through extensive numerical simulations, we show that our framework outperforms end-to-end training in learning speed across multiple RA optimization tasks.
\end{list}

\vspace{-2.5mm}
\section{Related Work}\label{sec:work}
\vspace{-.5mm}

\textbf{1) Traditional RA Network Optimization:} RA techniques are broadly classified into two types: sensing-free and sensing-based.
Sensing-free protocols, like ALOHA~\cite{Abramson70}, are valued for their simplicity in satellite and multi-hop networks~\cite{Yu23,Baccelli06} but suffer from inherently high collision rates.
In contrast, sensing-based RA employs a listen-before-talk (LBT) mechanism.
The most prominent example, CSMA/CA~\cite{Kleinrock75}, uses random backoff to mitigate contention.
However, its heuristic nature often results in excessive delays and suboptimal channel utilization.
Moreover, its theoretical performance guarantees remain unclear.
While research into adaptive, queue-length-based CSMA~\cite{Jiang10:CSMA_TIT} shows that throughput optimality is possible, these findings often rely on idealized models that fail to maintain fairness or low latency as networks scale~\cite{Jiang10:CSMA_TON,Jiang12:CSMA_TIT,Ni12:Q-CSMA}.

\vspace{.5mm}
\textbf{2) Learning-driven RA Network Optimization:} To overcome the limitations of heuristics, recent work has explored RL for RA.
Early efforts focused on optimizing CSMA/CA parameters, such as using Q-learning or SAC-LSTM architectures to tune contention windows and waiting times~\cite{Han20,Lee24}.
However, since these methods remain dependent on the probabilistic framework of CSMA, their performance gains are fundamentally limited.

To move beyond simply parameter tuning, researchers have turned to MARL to design deterministic transmission policies.
In the literature, notable approaches include deep Q-networks for achieving $\alpha$-fairness~\cite{Yu19}, federated learning-based policy training~\cite{Zhang20}, and value factorization or policy gradient methods such as QMIX~\cite{Guo22} and MAPPO~\cite{He24}.
Despite their strong performance, these methods rely on CTDE.
The requirement for a central entity introduces significant communication overhead and raises concerns regarding scalability, data privacy, and security.
Furthermore, these models are often task-specific.
As a result, changing the optimization objective typically requires extensive architectural modification and retraining, leading to poor sample efficiency.
These challenges motivate our development of a {\em fully decentralized FM-aided} MARL framework that allows efficient wireless RA network optimization under diverse objectives without central coordination.

\vspace{.5mm}
\textbf{3) Foundation Models for Wireless Networks:} FMs have achieved state-of-the-art performance in domains such as natural language processing and computer vision through large-scale representation learning.
In wireless communications, FMs are now being explored for PHY and MAC layer tasks~\cite{Du24,Fontaine24,Chen24,Yang25_2}, especially within semantic and integrated sensing and communication (ISAC) frameworks.
In particular, their ability to generalize across heterogeneous tasks and domains makes them highly suitable for addressing the challenges posed by dynamic and complex wireless environments.
Emerging works such as WirelessGPT~\cite{Yang25_1} and patch-masked FMs~\cite{Sheng25} have demonstrated that Transformer-based architectures can effectively handle multi-task predictions, such as channel state estimation and traffic forecasting.

Similar to this line of work, we investigate the use of FMs for wireless RA network optimization.
Unlike existing RL methods that suffer from task-specific overfitting, we utilize an FM as a {\em reward-agnostic backbone} within an AC framework.
This allows for superior generalization and faster adaptation across diverse networking objectives without the need for centralized coordination.
Our approach aims to overcome the limitations of conventional RL methods that often struggle with their lack of adaptability and dependence on centralized training.

\vspace{-2mm}
\section{System Model and Problem Formulation}\label{sec:system}
\vspace{-.5mm}

\subsection{Network System and the RA Protocol}\label{ssec:system_config}

We consider an RA-based wireless network with $K$ single-antenna devices that contend for channel access to transmit data to an $N_\mathsf{r}$-antenna access point (AP).
Each device $k\in\mathcal{K}=\{1,2,\ldots,K\}$ maintains a buffer where packets are queued and transmitted in a first-come first-served manner.
We consider a saturated traffic condition, i.e., the packet arrival rate at each node is sufficiently high so that all devices consistently have data packets to transmit.

The network operates in slotted time over a finite horizon $T$, with time slots indexed by $t\in\{1,2,\ldots,T\}$.
Similar to the CSMA/CA protocol, the network adopts the LBT mechanism.
Before attempting transmission, each device first performs a clear channel assessment (CCA) to detect whether the channel is idle.
If the channel remains idle for a specified duration, the device decides whether to transmit a packet to the AP.
A device that chooses \textit{not to transmit} simply returns to the CCA phase.
If the device decides to \textit{transmit}, it sends a single data packet to the AP over $\tau$ time slots.

A transmission is considered successful if the AP correctly receives the packet without collision, in which case the AP sends out an acknowledgment (ACK) for confirmation.
This allows the device to proceed with its next packet transmission.
In the event of a collision (i.e., multiple devices have simultaneously transmitted), the AP does not return an ACK, and each involved device will wait for the next transmission opportunity to attempt retransmission.

To support fully decentralized network operation, we assume there is no global controller responsible for tasks like information aggregation or global message broadcasting.
Instead, RA devices can exchange information with their neighbors according to local communication links defined by an undirected graph $\mathcal{G}\hspace{-.5mm}=\hspace{-.5mm} (\mathcal{K},\mathcal{E})$, where $\mathcal{E}$ represents the set of device-to-device connectivity links~\cite{Boyd06}.

\vspace{-2mm}
\subsection{Data Transmission Model}\label{ssec:system_transmission}

The single-input multiple-output (SIMO) channel gain between device $k$ and the AP at time slot $t$ is modeled as
\vspace{-1mm}
\begin{equation}
    \boldsymbol{g}_{k,t} = \sqrt{\eta_{k}} \boldsymbol{h}_{k,t},
    \label{eq:device_channel}
    \vspace{-1mm}
\end{equation}
where $\boldsymbol{h}_{k,t}\in\mathbb{C}^{N_\mathsf{r}}$ represents the small-scale fading vector whose entries are independent and identically distributed (i.i.d.) complex Gaussian random variables with zero mean and unit variance.
The term $\eta_{k}$ in~\eqref{eq:device_channel} denotes the large-scale fading factor which depends on the distance between device $k$ and the AP.
For example, the magnitude of small-scale fading could follow a Rayleigh distribution, which is well-suited for modeling dense scattering wireless environments~\cite{Tse05}.
Also, our large-scale fading model can be adopted to various realistic propagation scenarios, including the ITU-R indoor~\cite{ITU} and 3GPP urban-micro (UMi)~\cite{3GPP} pathloss models.

Using~\eqref{eq:device_channel}, the achievable data rate of device $k$ at time slot $t$ can be obtained by the Shannon capacity formula:
\vspace{-1mm}
\begin{equation}
    R_{k,t} = B\log_2\left(1+\frac{p_{k}\|\boldsymbol{g}_{k,t}\|^2_2}{BN_0}\right),
    \label{eq:device_rate}
    \vspace{-1mm}
\end{equation}
where $B$ is the transmission bandwidth, $p_{k}$ is the transmit power of device $k$, and $N_0$ is the noise spectral density~\cite{Tse05}.
Assuming a block-fading channel (i.e., the channel remains static during each packet transmission), we express the amount of data which device $k$ can successfully transmit at time slot $t$ as
\vspace{-1mm}
\begin{equation}
    m_{k,t} = R_{k,t}T_\mathsf{s}\tau,
    \label{eq:device_data}
    \vspace{-1mm}
\end{equation}
where $T_\mathsf{s}$ denotes the slot duration.
Note that \eqref{eq:device_data} plays a crucial role in transmission decision-making.
For example, a device with low $m_{k,t}$ may lead to suboptimal throughput by potentially blocking other devices with more favorable channel conditions (i.e., higher $m_{k,t}$).
Since $m_{k,t}$ is directly proportional to $R_{k,t}$, each RA device needs to carefully consider its channel quality when making transmission decisions to maximize the overall network performance.

\vspace{-2mm}
\subsection{RA Network Optimization Tasks}\label{ssec:system_objectives}

To illustrate the FM's capability in supporting multiple performance goals in wireless RA network optimization with MARL, in this work, we consider the following three objectives as downstream tasks:

\vspace{.5mm}
\textbf{1) Fair-AoI (Worst-case AoI Minimization):}
AoI (age of information) measures the freshness of information by tracking the time elapsed since the latest data transmission of a device.
Let $\ell_{k,t}$ denote the number of time slots that have elapsed since device $k$'s last successful packet transmission by time slot $t$~\cite{Guo22}.
This quantity can be interpreted as the anticipated AoI if the AP successfully receives a packet from device $k$ at time slot $t$.
Let $\mathcal{T}_{k,t} \subseteq \{1,2,\ldots,t\}$ denote the set of time slot indices at which device $k$ has successfully transmitted a packet during the time horizon $t\leq T$.
Then, we minimize the worst-case AoI among all devices using
\vspace{-1mm}
\begin{equation}
    \min\max_{k\in\mathcal{K},t\in\mathcal{T}_{k,T}} \ell_{k,t}.
    \label{eq:network_AoI}
    \vspace{-1mm}
\end{equation}
Note that this mini-max formulation solely focuses on fairness in transmit frequency and does not account for channel quality or data rates.
This task is particularly useful in applications such as real-time system control in wireless drone networks, where timely information updates are important.

\vspace{.5mm}
\textbf{2) Max-Rate (Sum-rate Maximization with Minimum Rate Guarantees):}
As a classical network performance metric, sum-rate quantifies the total amount of data successfully received by the AP for a given time window.
Here, we let
\vspace{-1mm}
\begin{equation}
    x_{k,t} = \frac{1}{tT_\mathsf{s}}\sum_{t'\in\mathcal{T}_{k,t}}m_{k,t'}
    \label{eq:tput}
    \vspace{-1mm}
\end{equation}
denote the throughput of device $k$ over the time horizon $t\leq T$.
To maximize the sum-rate, i.e., $\sum_{k\in\mathcal{K}} x_{k,T}$, it is ideal for a network to prioritize RA devices with higher data rates $m_{k,t}$.
However, such a policy can result in poor fairness as it may allow devices with superior channel quality to dominate transmission resources.
To prevent this, we impose a minimum rate requirement $R_\mathsf{min}$ and formulate our objective as
\vspace{-1mm}
\begin{align}
    &\max \hspace{1mm} \sum_{k\in\mathcal{K}}x_{k,T} \label{eq:network_sum_tput} \\
    \text{s.t. } & x_{k,T} \geq R_\mathsf{min}, \hspace{1mm} \forall k\in\mathcal{K}. \label{eq:network_sum_tput1}
    \vspace{-1mm}
\end{align}
The problem above seeks to maximize the overall throughput while ensuring that each device's rate maintains at least $R_\mathsf{min}$.

\smallskip
\textbf{3) Fair-Rate (Worst-case Rate Maximization):}
In certain applications such as collaborative sensing, the system performance can be compromised by the device with the poorest throughput. In such cases, it is crucial to promote throughput fairness to prevent any single device from becoming a bottleneck.
To ensure balanced per-device rate, we maximize the minimum throughput across all devices over the time horizon $T$, which can be expressed as 
\vspace{-1mm}
\begin{align}
    \max \hspace{1mm} \min_{k\in\mathcal{K}}\,x_{k,T}.
    \label{eq:network_equal_tput}
    \vspace{-1mm}
\end{align}
This formulation ensures that no device is significantly disadvantaged in terms of data rate, thus supporting balanced and reliable system performance.

\vspace{-1mm}
\section{Foundation Model-Aided Decentralized MARL for Wireless RA Network Optimization}\label{sec:framework}

In this section, we first review the AC framework for MARL.
We then formulate the MDP tailored to our wireless RA network optimization.
Next, we describe the architecture of our FM, which serves as a reward-agnostic backbone within the AC framework.
Finally, we present the proposed FM-aided decentralized MARL algorithm and analyze its convergence properties.

\vspace{-2mm}
\subsection{Actor-Critic Framework for MARL Problem}\label{ssec:framework_AC}

We model the wireless RA network optimization problem as an MARL task defined by the MDP tuple $\{\mathcal{S}, \{\mathcal{A}_k\}_{k\in\mathcal{K}}, P, \{\varrho_k\}_{k\in\mathcal{K}}\}$, where $\mathcal{S}$ is the global state space and $\mathcal{A}_k$ is the action set for device $k$.
The term $P:\mathcal{S} \times \mathcal{A} \times \mathcal{S} \rightarrow [0,1]$ denotes the state transition probability, where $\mathcal{A} = \prod_{k\in\mathcal{K}} \mathcal{A}_k$ is the joint action set of all agents.
Lastly, $\varrho_k:\mathcal{S} \times \mathcal{A} \rightarrow \mathbb{R}$ is the local reward function for device $k$.

Since most of time slots $\{1,\ldots,T\}$ are occupied by the RA protocol steps, such as CCA, packet transmission, and waiting upon ACK, the actual time slots related to MARL procedure are confined.
Hence, we consider our MDP to only progress over time slots, where an action is taken by the devices, which we denote using $n$.

At an MDP step $n$, each device $k$ at a global state $s_n\in\mathcal{S}$ selects an action $a_{k,n}\in\mathcal{A}_k$ according to its local policy $\pi_k$, i.e., $a_{k,n}\sim\pi_k(\cdot\vert s_n)$.
Due to the coupled nature of the devices' actions, the state transitions from $s_{n}$ to $s_{n+1}$ via $P(s_{n+1}|s_{n},\boldsymbol{a}_{n})$, where $\boldsymbol{a}_{n}=[a_{1,n},a_{2,n},\ldots,a_{K,n}]$ is the joint action vector.
Each device $k$ then receives a reward $r_{k,n}=\varrho_k(s_{n},\boldsymbol{a}_{n})$.

The AC framework employs two sets of function approximators: \textit{actors} and \textit{critics}.
Each device $k$ is equipped with an actor network parameterized by weights $\theta_k$, which defines its local policy $\pi_{\theta_k}$.
Let $\theta = [\theta_1^{\top},\theta_2^{\top},\ldots,\theta_K^{\top}]^\top$ denote the joint weight vector of all $K$ actors.
The goal of MARL is to optimize $\theta$ to maximize the expected long-term discounted cumulative reward defined as $J(\theta)= \mathbb{E}_{\pi_{\theta}}\!\left[\sum_{n=0}^{\infty}\gamma^n\bar{r}_{n}\right]$, where $\overline{r}_n=\frac{1}{K}\sum_{k=1}^{K}r_{k,n}$ is the global average reward, and $\gamma\in[0,1]$ is the discount factor.
For each device $k$, the state-value function $V_{\theta} (s) = \mathbb{E}_{\pi_\theta}\left[\sum_{n=0}^{\infty}\gamma^n\overline{r}_n\Big|s_{0}=s\right]$ is estimated using a critic network of parameters $w_k$.
Applying the Bellman equation~\cite{Sutton98}, the estimated function can be expressed as $V_{w_k} (s) = \mathbb{E}_{\pi_\theta}\left[\overline{r} + \gamma V_{w_k}(s')\right]$.
During training, each actor selects an action based on its current policy, and the corresponding critic evaluates the resulting reward.
The actor policy is then updated using the policy gradient $\mathbb{E}_{s,a} [\nabla_{\theta_k}\log\pi_{\theta_k}(a_k|s)\cdot \text{Adv}_{\theta}(s,\boldsymbol{a})]$,
where $\text{Adv}_{\theta}(s,\boldsymbol{a}) = \bar{r} + \gamma V_\theta (s') - V_\theta(s)$ is the advantage function.
This update encourages the actor to select actions that yield higher expected future rewards relative to the baseline value.

\vspace{-2mm}
\subsection{MDP Formulation for MARL-based Wireless RA Network Optimization}\label{ssec:framework_MDP}

We now design the MDP components for our RA network optimization.
First, the state at network time slot $t$ is defined as
\vspace{-1mm}
\begin{equation}
    s_{t}=\left(\{\widehat{\xi}_{k}\}_{k\in\mathcal{K}}, \{\widehat{\ell}_{k,t}\}_{k\in\mathcal{K}}, q_{t}\right),
    \label{eq:MDP_state}
    \vspace{-1mm}
\end{equation}
where $\widehat{\xi}_{k}=\frac{\xi_{k}-\min_k\xi_{k}}{\max_k\xi_{k}-\min_k\xi_{k}}$ is the min-max normalized long-term signal-to-noise ratio (SNR) for device $k$, $\widehat{\ell}_{k,t}=\ell_{k,t}/\lambda $ is the normalized AoI with $\lambda$ as a scaling factor, and $q_{t}\in\{0,1\}$ is the binary channel occupancy indicator.
We use dB scale for SNR, i.e., $\xi_{k}=\mathbb{E}_t\left[10\log_{10}\frac{p_{k} \|\boldsymbol{g}_{k,t}\|^2_2}{BN_0}\right]=10\log_{10}\frac{p_{k} \eta_k}{BN_0}$.
For each device $k$, $\widehat{\xi}_{k}$ and $\widehat{\ell}_{k,t}$ are local information.
We assume that long-term SNRs of other devices $\{\widehat{\xi}_{k'}\}_{k'\in\mathcal{K}\setminus k}$ are available via local information exchanges or broadcast that take place at a much slower rate.
Following the assumptions in~\cite{Yu19,Guo22,He24}, we consider AoIs of other devices $\{\widehat{\ell}_{k',t}\}_{k'\in\mathcal{K}\setminus k}$ to be observable from the broadcast ACK packets.
Note that $q_t$ is also observable during the CCA phase.

Each device operates in a discrete action space $\mathcal{A}_k=\{0, 1\}$, where $0$ indicates the decision to \textit{wait} and $1$ represents the decision to \textit{transmit}.
Hence, at each given time slot $t$, device $k$ relies on its policy $\pi_k$ and selects its action $a_{k,t}\in\{0, 1\}$.

We now define local reward functions corresponding to each downstream task.
First, for device $k$ at time slot $t$, the local reward for fair-AoI task is defined as
\vspace{-1mm}
\begin{equation}
    r_{k,t}^{\mathsf{FA}} = -\frac{\ell_{k,t}}{\omega_\mathsf{FA}K},
    \label{eq:MDP_reward_AoI}
    \vspace{-1mm}
\end{equation}
where $\omega_\mathsf{FA}K$ is a scaling factor to prevent the reward from increasing with the network size.
Since the value of $\ell_{k,t}$ becomes zero only upon successful transmission, this reward encourages devices to avoid collisions.
Moreover, with~\eqref{eq:MDP_reward_AoI}, prioritizing devices with higher AoI values leads to greater marginal reward gains.
Hence, maximizing~\eqref{eq:MDP_reward_AoI} promotes AoI fairness across the network.

Next, the local reward of device $k$ at time slot $t$ for max-rate task is given by
\begin{equation}
    r_{k,t}^{\mathsf{MR}} = \frac{x_{k,t}}{\omega_\mathsf{MR}} + D(x_{k,t}),
    \label{eq:MDP_reward_sum}
\end{equation}
where $\omega_\mathsf{MR}$ is a scaling parameter, and $D(x_{k,t})$ is a penalty function:
\begin{equation}
    \hspace{-1mm}D(x_{k,t}) =
    \begin{cases}
        \log\left(\frac{x_{k,t}+\omega_\mathsf{P1} R_\mathsf{min}}{(1+\omega_\mathsf{P1})R_\mathsf{min}}\right)^{\omega_\mathsf{P2}/K}, & \text{if } x_{k,t} \leq R_\mathsf{min}, \\
        0, & \text{otherwise},
    \end{cases}
    \label{eq:MDP_penalty}
\end{equation}
with scaling factors $\omega_\mathsf{P1}$ and $\omega_\mathsf{P2}$.
Note that~\eqref{eq:MDP_penalty} is bounded in the interval $[\log(\omega_\mathsf{P1}/(1+\omega_\mathsf{P1}),  0]$ and designed to only penalize devices that fail to meet the threshold $R_\mathsf{min}$.
In other words, this reward drives each device to maintain the minimum rate requirement while maximizing the sum-rate.

Lastly, we adopt the $\alpha$-fairness utility function~\cite{Yu19} and define the local reward of device $k$ at time slot $t$ for fair-rate task as
\begin{equation}
    r_{k,t}^{\mathsf{FR}} = \frac{1}{1-\alpha_\mathsf{FR}}\left(\frac{x_{k,t}}{\omega_\mathsf{FR1}/K}+\omega_\mathsf{FR2}\right)^{1-\alpha_\mathsf{FR}},
    \label{eq:MDP_reward_fair}
\end{equation}
where $\alpha_\mathsf{FR}\in(1,\infty)$ controls the fairness, and $\omega_\mathsf{FR1}$ and $\omega_\mathsf{FR2}$ are scaling factors.
Larger values of $\alpha_\mathsf{FR}$ impose stricter fairness.

To reflect the condition of fully decentralized MARL, we strictly define our reward functions to be locally computable.
For the consensus step, we let $r_{k,t} \in \{r_{k,t}^{\mathsf{FA}}, r_{k,t}^{\mathsf{MR}}, r_{k,t}^{\mathsf{FR}}\}$ to be exchanged across the devices through local communication links defined by $\mathcal{G}$.

\begin{figure}[!t]
    \centering
    \includegraphics[width=1\linewidth]{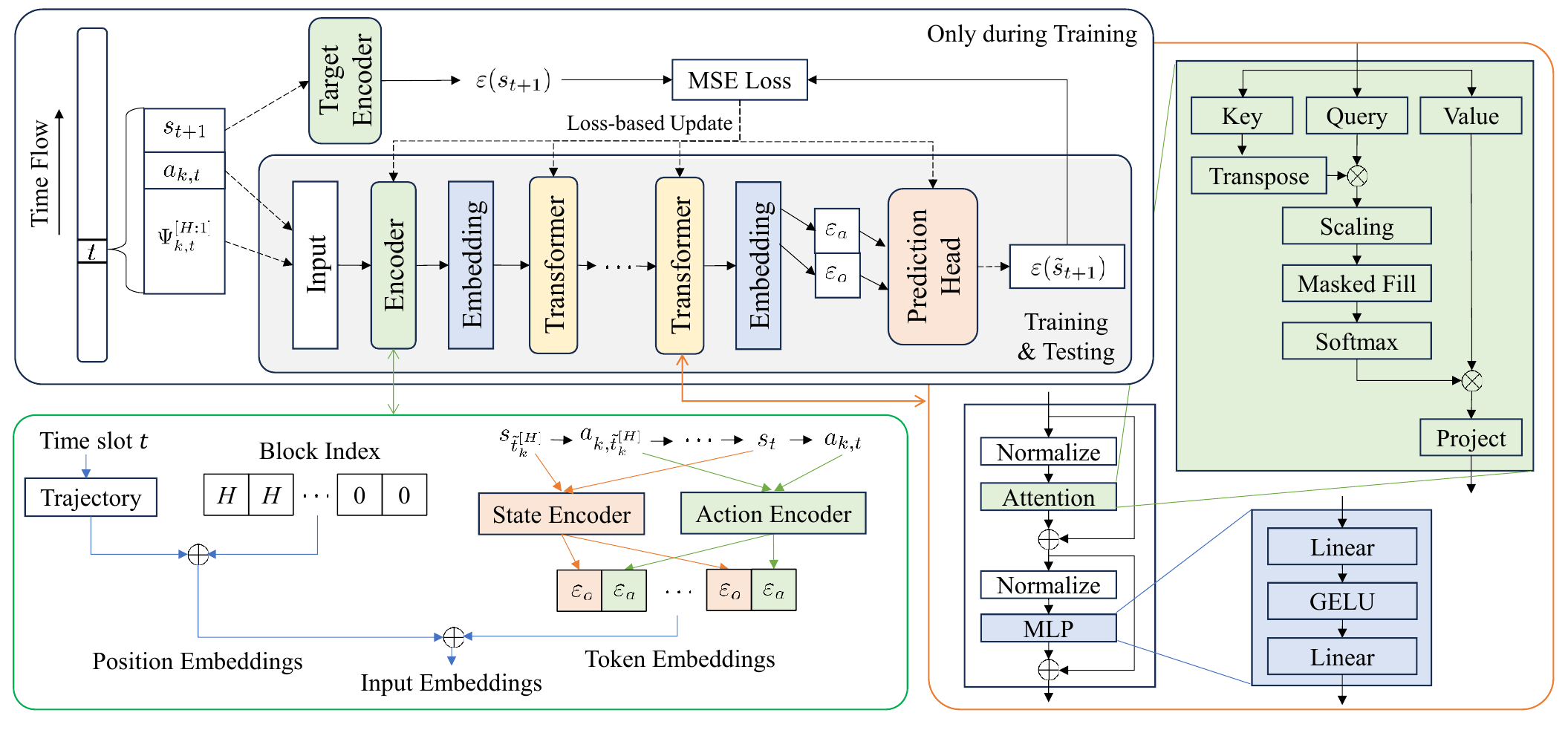}
    \caption{The architecture of FM as a reward-agnostic critic backbone. The model takes a state-action history and extracts embeddings via state and action encoders. Transformer blocks are used to process the embeddings. The output of a prediction head is sent to the reward-specific critic head.}
    \Description{.}
    \label{fig:FM}
    \vspace{-4mm}
\end{figure}

\subsection{Foundation Model Architecture Design}\label{ssec:framework_FM}

We present the detailed design of our FM, which supports the AC framework described in Section~\ref{ssec:framework_AC} as a reward-agnostic critic backbone.
Typical RL models rely on reward signals as supervision, i.e., rewards serve as targets that guide an agent to train and learn desirable behaviors.
In contrast, our FM is pretrained in a {\em self-supervised} manner and thus does {\em not} require reward labels.
This allows our FM to be reward-agnostic and transferable across different RL tasks.

In this work, we leverage the FM to enhance our MARL-based optimization for wireless RA-based networks.
During pretraining, the FM captures useful feature representations that effectively model the underlying environment dynamics, which can improve learning efficiency over multiple downstream tasks (e.g., three wireless RA objectives defined in Section~\ref{ssec:system_objectives}).
Our FM pretraining is inspired by the SMART framework~\cite{Sun23}, which uses a control transformer architecture to facilitate self-supervised learning across different pretraining goals.
Similar to SMART, we employ encoders to extract meaningful embeddings from the inputs and utilize the attention mechanism within transformers to perform a prediction task.
In this paper, we specifically adopt the \emph{forward dynamics prediction} task for self-supervised pretraining, which aligns naturally with the core principle of RL to optimize future outcome predictions.

The architecture of our FM is shown in Figure~\ref{fig:FM}, which consists of three main components: i) encoders, ii) transformers, and iii) a prediction head.
The encoders first map the input into a sequence of token embeddings, which are augmented with position embeddings to convey temporal information.
The position embeddings are designed to capture both the relative order within the history window and the absolute position within the global time trajectory, allowing the model to effectively understand the environment dynamics.
The combined embeddings are then passed through a multi-transformer module, where each block consists of a self-attention layer and a multi-layer perceptron (MLP).
The MLP uses the Gaussian error linear unit (GELU) as its activation function.
The output from the transformer module is fed into the prediction head, which generates an embedding that represents next-step state.

During the pretraining, the FM is optimized such that the {\em loss between the predicted and actual next-state embeddings} is minimized.
By integrating the self-supervised FM, we allow our MARL agents to utilize effective representations of the environment dynamics during their online learning.
Note that the pretraining of FM does not require explicit reward signals, making the model generalizable to different downstream RL tasks.

Similar to~\cite{Yu19,Guo22,He24}, we assume that each device can store the $H$ latest states and actions.
Let $\tilde{t}_{k}^{[h]}$ denote the time slot when the $h$-th latest action was taken by device $k$.
Then, the state-action history at time slot $t$ can be written as
\vspace{-1mm}
\begin{equation}
    \Psi_{k,t}^{[H:1]} = \left\{s_{\tilde{t}_{k}^{[H]}},a_{k,\tilde{t}_{k}^{[H]}},\ldots,s_{\tilde{t}_{k}^{[1]}},a_{k,\tilde{t}_{k}^{[1]}},s_{t}\right\}.
    \label{eq:MDP_hist}
    \vspace{-1mm}
\end{equation}
We integrate~\eqref{eq:MDP_hist} as an input to both actor and critic.
This enables the model to better reflect the temporal dynamics of the wireless RA environment and improve its learning stability and performance.

\vspace{-2mm}
\subsection{Foundation Model-Aided Decentralized MARL Algorithm Design}\label{ssec:framework_MARL}

We now describe the proposed FM-aided decentralized MARL algorithm for RA network optimization.
Figure~\ref{fig:MARL} illustrates the overall architecture of our decentralized AC framework.
Each device $k\in\mathcal{K}$ updates its local actor and critic networks using its own experience.
To capture temporal dependencies, each device maintains a history buffer that stores past observation–action pairs and uses this history as input during learning.
As described in Section~\ref{sec:system}, neighboring devices can exchange limited information.
In our design, each device shares only {\em scalar local rewards} with its neighbors, which incurs significantly lower communication costs compared to exchanging full critic parameters, as in~\cite{Zhang18,Chen22,Hairi22}.
Specifically, the critic network consists of two parts: \textit{a reward-agnostic FM} and a reward-specific head.
The FM is pretrained using observation–action sequences and remains fixed during the RL phase.
This design eliminates the need for online updates of the high-capacity backbone, reducing computational and communication overhead while leveraging rich representations of system dynamics.

The overall procedure of the proposed algorithm for a $K$-device RA-based MAC layer is summarized in Algorithm~\ref{alg:main}.
Each device operates under the LBT protocol and continuously monitors the channel state.
Meanwhile, each device updates its local status based on transmission outcomes (e.g., successful transmission or collision)~\textbf{(Lines 6 - 10)}.
When the channel is sensed to be idle, the device observes the current state $s_{t}$ and makes a transmission decision $a_{k,t}$ using its local policy conditioned on the past action-observation history and current observation (i.e., $\Psi_{k,t}^{[H:1]}$).
After executing the action, the tuple $\{s_t, a_{k,t}, r_{k,t}\}$ is stored in the history buffer \textbf{(Lines 11 - 17)}.
This interaction process is repeated over $T$ time slots.

\begin{figure}[!t]
    \centering
    \includegraphics[width=1\linewidth]{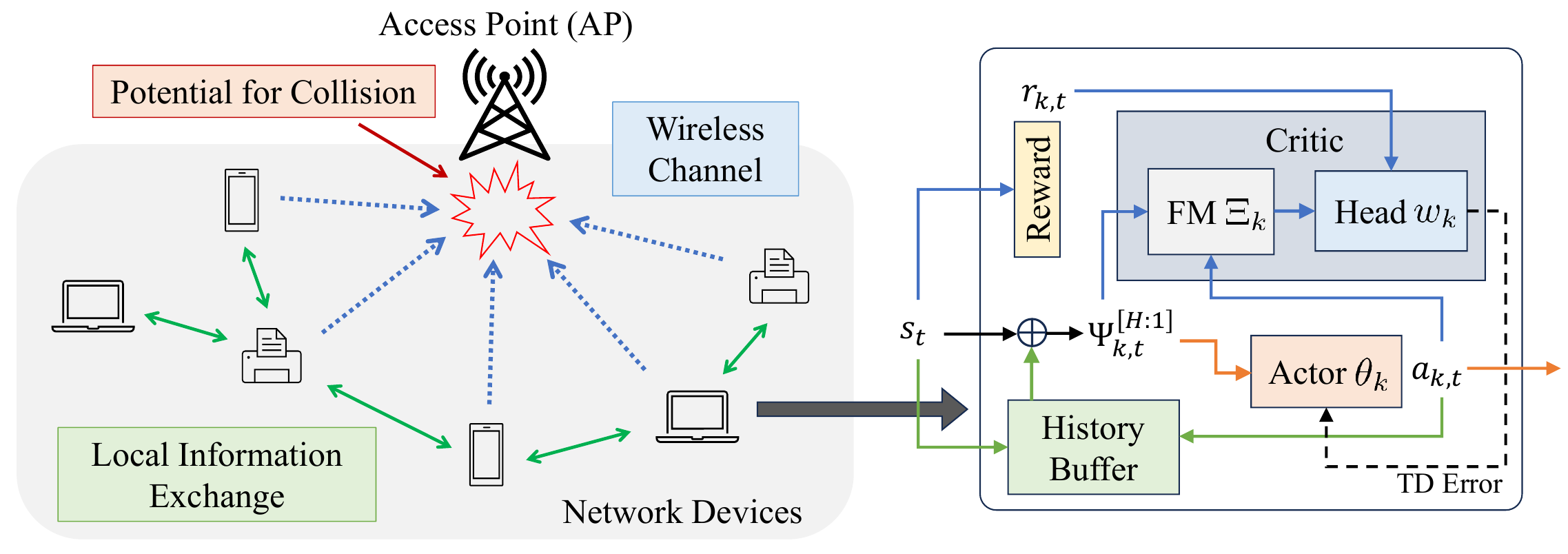}
    \caption{A visual representation of our fully decentralized MARL framework. Both the actor and critic are trained in a decentralized manner. The critic contains two distinct parts: a reward-agnostic backbone for forward dynamics prediction and a reward-specific head for task-specific finetuning.}
    \Description{.}
    \label{fig:MARL}
    \vspace{-6mm}
\end{figure}

\begin{algorithm}[!ht]
	\caption{FM-aided Decentralized MARL for RA Network Optimization.}
    \label{alg:main}
    \textbf{Input:} device set $\mathcal{K}$, pretrained FM weights $\{\Xi_k\}_{k\in\mathcal{K}}$, time horizon length $T$, history length $H$, consensus weight matrix $\mathbf{C}$, consensus iteration count $G$, number of samples to collect $N_B$, actor rate $\alpha$, critic rate $\beta$  \\
    \textbf{Initialize:} For all $k\in\mathcal{K}$, actor weights $\theta_k$, critic weights $w_k$, ready-for-update status $u_k=\text{False}$, sample count $b_k=0$ \\
    \textbf{Load:} FM weights $\Xi_k$ for all $k\in\mathcal{K}$ \\
    \For{$t=1, 2 \dots, T$}{
        \For{$k \in \mathcal{K}$}{
            Observe $q_t$ via clear channel assessment (CCA) \\
            \If{ACK received}{
                $\ell_{k,t} \leftarrow 0$
            }
            \Else{
                $\ell_{k,t} \leftarrow \ell_{k,t} + 1$
            }
            \If{$q_t=0$}{
                Acquire state $s_{t}$ and reward $r_{k,t}$\\
                Select $a_{k,t}\sim\pi_{\theta_k}(\:\cdot\:|\Psi_{k,t}^{[H:1]})$ \\
                \If{$a_{k,t}=1$}{
                    Initiate a packet transmission
                }
                Store $s_{t}$, $a_{k,t}$, and $r_{k,t}$ in the history buffer \\
                $b_k \leftarrow b_k + 1$ \\
            }
            \If{$b_k = N_b$}{
                $u_k \leftarrow \text{True}$
            }
        }
        \If{$u_k=\text{True}, \forall k\in\mathcal{K}$}{
            $\overline{\mathbf{R}}^{(t)}=\mathbf{C}^G\mathbf{R}^{(t)}$ \\
            \For{$k \in \mathcal{K}$}{
                $\bar{w}_{k,0} \leftarrow w_k$ \\
                \For{$b = 1,2,\ldots,N_B$}{
                        $\delta_{k,b} \leftarrow \overline{\mathbf{R}}^{(t)}_{[k,b]} + \gamma V_{\bar{w}_{k,b-1}}(\phi_{\Xi_k}(\Psi_{k,\tilde{t}_{k}^{[b-1]}}^{[H+b-1:b]}))- V_{\bar{w}_{k,b-1}}(\phi_{\Xi_k}(\Psi_{k,\tilde{t}_{k}^{[b]}}^{[H+b:b+1]}))$ \\
                        $\tilde{w}_{k,b} \hspace{-.5mm} \leftarrow \bar{w}_{k,b-1} - \beta\delta_{k,b} \cdot \nabla  V_{\bar{w}_{k,b-1}}(\phi_{\Xi_k}(\Psi_{k,\tilde{t}_{k}^{[b]}}^{[H+b:b+1]}))$ \\
                        $\bar{w}_{k,b} \leftarrow \argmin_{w\in\mathcal{B}(B)}\|w-\tilde{w}_{k,b}\|_2$ \\
                        $w_k \leftarrow \frac{b}{b+1}w_k + \frac{1}{b+1}\bar{w}_{k,b}$
                    }
                    $\theta_k \hspace{-.5mm} \leftarrow \hspace{-.5mm} \theta_k \hspace{-.5mm} + \hspace{-.5mm} \frac{\alpha}{N_B} \hspace{-1mm} \sum_{b=1}^{N_B}\delta_{k,b} \hspace{-.5mm} \cdot \hspace{-.5mm} \nabla\hspace{-.5mm}\log\pi_{\theta_k}(a_{k,\tilde{t}_{k}^{[b]}}\vert\Psi_{k,\tilde{t}_{k}^{[b]}}^{[H+b:b+1]}) \hspace{-5mm}$ \\
                $u_k \leftarrow \text{False}$ \\
            }
        }
    }
    \textbf{Output:} $\theta_k$ for all $k\in\mathcal{K}$ \\
\end{algorithm}

Each time the devices are ready to update their model weights, the consensus \textbf{(Line 21)} first takes place for reward exchange.
Each consensus step consists of $G$ rounds of communication, where weight-based averaging is performed in each round.
Then, each device updates its actor and critic parameters after computing the temporal-difference (TD) error \textbf{(Lines 23 - 29)}.
While the actor takes the input directly from the observation-action history, the critic processes it via the FM backbone to obtain a latent representation $\phi_{\Xi_k}(\Psi_{k,t})$, which is then fed into the reward-specific head for value estimation.
Extending our prior work~\cite{Oh2025} to FM-aided MARL, we adopt a batch-based update scheme to facilitate the theoretical analysis in Section~\ref{ssec:framework_analysis}.
In particular, the critic update incorporates projection onto a bounded parameter set and an averaging step \textbf{(Lines 27 - 28)} to ensure that each update remains within a controlled region and allow convergence guarantees~\cite{Xu20,Zhang25}.

\vspace{-2.5mm}
\subsection{Theoretical Performance Analysis}\label{ssec:framework_analysis}
\vspace{-.5mm}

Our prior work on RA network optimization via decentralized MARL~\cite{Oh2025} establishes convergence guarantees under linear value function approximation, which has been the dominant analytical framework for studying MARL convergence~\cite{Chen22,zhang2021taming,Hairi22}.
However, linear approximations are insufficient to capture the expressive behavior of deep neural networks (DNNs), which are widely used in modern RL applications.
To address this gap, we build upon our previous analytical framework in~\cite{Zhang25} and extend our results to the FM-aided setting where value functions are {\bf approximated using DNNs}.
A key distinction of this work from~\cite{Zhang25} is that their analysis assumes consensus over the entire critic model, whereas our approach requires {\em only local reward exchange} among neighboring agents.
By restricting consensus to scalar rewards rather than high-dimensional model parameters, our framework significantly improves practical deployability and reduces communication overhead.
Also, we address the impact of {\bf integrating pretrained FMs} in the actor-critic pipeline, where we perceive the power of self-supervised learning as the prediction capability on next states.

We now formally describe the DNN architecture used in our analysis.
For each device $k$, the critic maintains a DNN-based approximation $V_{w_k}$ of the true value function $V_{\theta}$.
We assume the network has width $M_w$ and depth $M_d$.
Let $x_\text{in}\in\mathbb{R}^d$ denote the input features vector.
The forward pass is defined as:
\vspace{-1mm}
\begin{align}
    x^{(0)} &= W^{(0)}x_\text{in}, \\
    x^{(h)} &= \frac{1}{\sqrt{M_w}}\mathsf{ReLU}(W^{(h)}x^{(h-1)})\text{ for } h=1,2,\ldots,M_d \\
    y &= b^{\top}x^{(M_d)},
\end{align}

\vspace{-2mm}
\noindent where $W^{(0)}\in\mathbb{R}^{m\times d}$, $W^{(h)}\in\mathbb{R}^{m\times m}$, and $b\in\mathbb{R}^m$ are the network parameters, and $\mathsf{ReLU}(x) := \max\{0,x\}$ is the ReLU activation function.
The actors share the same architecture; however, a softmax layer is appended to yield a probability distribution over actions.

With $n$ denoting the MDP iteration index, we establish the theoretical convergence guarantee of Algorithm~\ref{alg:main}.
First, we provide the required assumptions in the following.
\vspace{-1mm}
\begin{assumption}[Bounded Reward] \label{ass:r-max}
    For each downstream task, there exists a constant $r_{\text{max}}>0$ such that $|r_{k,n}| \le r_{\text{max}}, \forall n\ge 0, k\in\mathcal{K}$.
\end{assumption}
\vspace{-3mm}
\begin{assumption}[Mixing Time] \label{ass:mixing}
     There exists a stationary distribution $\zeta$ for $(s,a)$, and positive constants $\kappa$ and $\rho\in(0,1)$, such that $\sup_{s\in\mathcal{S}}\|P(s_n,a_n|s_0=s) - \zeta_\theta\|_{TV}\leq \kappa\rho^n, \forall n\ge 0$.
\end{assumption}
\vspace{-3mm}
\begin{assumption}[Lipschitz Continuity] \label{ass:smoothness}
    There exist some positive constants $L_J$, $L_W$, $L_V$, and $M_g$ such that, for any $\theta$ and $\theta'$, any $w$ and $w'$, any $x$ and $x'$, and any $s$ and $a$, we have $|J(\theta)-J(\theta')| \leq L_J\|\theta-\theta'\|_2$, $|V_{w}(x) - V_{w'}(x)| \leq L_W\|w-w'\|_2$, $|V_{w}(x) - V_{w}(x')\| \leq L_V\|x-x'\|_2$, and $\|\nabla_{\theta}\log\pi_{\theta}(a|s)\|_2 \leq M_g$.
\end{assumption}
\vspace{-3mm}
\begin{assumption}[Consensus Matrix] \label{ass:consensus}
    The consensus weight matrix $\mathbf{C}$ is doubly stochastic. Additionally, for all $k,k'\in \mathcal{K}$, there exists a positive constant $\nu > 0$ such that (i) $c_{kk} \geq \nu$ and (ii) $c_{kk'} \geq \nu$ whenever devices $k$ and $k'$ are connected.
\end{assumption}
\vspace{-1mm}
Assumption 4 ensures that information is exchanged with uniformly positive weights among connected agents, facilitating convergence of the local consensus estimates toward the globally averaged information.
\begin{assumption} \label{ass:epsilon-bias}
    For \,any policy \,$\theta$, there exist positive constants \, $\sigma$ \, and \, $\epsilon_\text{bias}$ \, such \, that \, $\mathbb{E}\big[\mathbf{Adv}_{\theta}(\Psi,a) - (1-\gamma)$$(\mathbf{F}(\theta)^{-1}$$\nabla_{\theta} J(\theta))^{\top}$ 
    
    \noindent $\nabla_{\theta}\log\pi_{\theta}(a|\Psi) \big]^2$ $\leq \epsilon_\text{bias}$,
    where $(\Psi,a)$ follows $\nu(\pi^*)$, which is a stationary distribution under the optimal policy $\pi^*$, and $\mathbf{F}(\theta) = \mathbb{E}_{\nu(\theta)}[\nabla_{\theta}\log\pi_{\theta}(a|\Psi)$ $\nabla_{\theta}\log\pi_{\theta}(a|\Psi)^\top] + \sigma \mathbf{I}$.
\end{assumption}
\vspace{-3mm}
\begin{assumption}[Universal Approximation] \label{ass:epsilon-critic}
    Suppose for any policy $\theta$, the critic estimation $V(\cdot;w^*)$, where $w^*$ is a stationary point, can be close enough to the ground truth $V_{\theta}(\cdot)$, i.e., there exists a positive constant $\epsilon_{\text{critic}}$ such that $\max_{\theta,x}|V(x;w^*) - V_{\theta}(x)| \leq \epsilon_{\text{critic}}$.
\end{assumption}
We state the main theoretical result as below.
\begin{theorem}
    With a pretrained FM of estimation error $\epsilon^\text{FM}_\text{est}$ and approximation error $\epsilon^\text{FM}_\text{approx}$ fixed and integrated as a reward-agnostic critic backbone, by choosing step-size parameters $\alpha=\frac{7}{2\sqrt{\mu}}$, $\alpha_n = \frac{\alpha}{n}$, $\beta = 1/\sqrt{N_B}$, and network/algorithm parameters satisfying $m = \Omega(d^{3}M_d^{-\frac{11}{2}})$, $B = \Theta(M_w^{\frac{1}{32}}M_d^{-6})$, and $N_B = \Omega(M_d^4)$, it holds with probability at least $1- \exp(-\Omega(\log^2M_w))$ that Algorithm~\ref{alg:main} achieves:
    \begin{align}
        &\mathbb{E}\left[J(\theta^*) - J(\theta_{N_T})\right] = \underbrace{\mathcal{O}\big(N_T^{-1}\big)}_{(a)} + \underbrace{\mathcal{O}(K^{\frac{3}{2}} (1-\nu^{K-1})^{G})}_{(b)} \nonumber \\
        & + \underbrace{\mathcal{O}\big(K^{\frac{1}{2}}(BM_d^{\frac{5}{2}}N_B^{-\frac{1}{4}} + B^{\frac{4}{3}}M_w^{-\frac{1}{12}}M_d^4)\log^{\frac{3}{2}}M_w\log^{\frac{1}{2}} N_B\big)}_{(c)} + \underbrace{\mathcal{O}\big(\sqrt{\epsilon_\text{bias}}\big)}_{(d)} \nonumber \\
        & + \underbrace{\mathcal{O}(K^{\frac{1}{2}}\epsilon^\text{FM}_\text{est})}_{(e)} + \underbrace{\mathcal{O}(K^{\frac{1}{2}}\epsilon^\text{FM}_\text{approx})}_{(f)} + \underbrace{\mathcal{O}\big(K^{1/2}\epsilon_{\text{critic}}/N_B^{1/2}\big)}_{(g)}. \label{eq:theorem}
    \end{align}\vspace{-1em}
    \label{thm:main}
\end{theorem}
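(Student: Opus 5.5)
The plan follows the natural-actor-critic analysis of \cite{Zhang25}, reworked for reward consensus and the frozen FM backbone. The first step is a one-step ascent inequality for the actor. Using the Lipschitz and smoothness properties in Assumption~\ref{ass:smoothness}, the regularized Fisher matrix $\mathbf{F}(\theta)$, and the performance-difference lemma, I will show that for each outer iteration $n$
\begin{align*}
J(\theta^*)-J(\theta_{n+1}) \le (1-c\,\alpha_n)\bigl(J(\theta^*)-J(\theta_n)\bigr) + \alpha_n\bigl(\sqrt{\epsilon_\text{bias}} + M_g\,\mathcal{E}_n\bigr) + \mathcal{O}(\alpha_n^2).
\end{align*}
Here $\mathcal{E}_n$ is the mean error between the TD errors $\delta_{k,b}$ used in the actor step and the true advantage $\mathbf{Adv}_{\theta_n}$. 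Assumption~\ref{ass:epsilon-bias} produces term (d) directly. Unrolling the recursion with $\alpha_n=\alpha/n$, and the stated choice of $\alpha$ tied to the curvature constant $\mu$, makes the contraction factor telescope to $\mathcal{O}(N_T^{-1})$. This gives term (a), and the remaining error terms carry through as averages.

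The second step splits $\mathcal{E}_n$ by a triangle inequality into three parts.

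\emph{Consensus error.} The exchanged reward $\overline{\mathbf{R}}_{[k,b]}$ differs from $\bar r$. Assumption~\ref{ass:consensus} makes $\mathbf{C}$ doubly stochastic with entries at least $\nu$, so $\mathbf{C}^{K-1}$ has all entries at least $\nu^{K-1}$. The standard Dobrushin/ergodicity argument then gives $\|\mathbf{C}^G\mathbf{R}-\mathbf{1}\bar{\mathbf{R}}\|\le \sqrt{K}(1-\nu^{K-1})^{\lfloor G/(K-1)\rfloor}\|\mathbf{R}\|$. Since $\|\mathbf{R}\|\le K r_{\max}$ by Assumption~\ref{ass:r-max}, this yields term (b).

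\emph{Critic optimization error.} The projected, averaged TD iterates approach the stationary point $w^*$ of the head on FM features. I will import the overparameterized-ReLU TD analysis of \cite{Zhang25,Xu20}. Its ingredients are linearization around initialization within radius $B$, which holds with probability $1-\exp(-\Omega(\log^2 M_w))$ and costs $B^{4/3}M_w^{-1/12}M_d^4$ up to log factors. Mixing in Assumption~\ref{ass:mixing} controls Markovian sample noise, and $\beta=1/\sqrt{N_B}$ gives the $BM_d^{5/2}N_B^{-1/4}$ rate. Summing over agents via Cauchy--Schwarz produces the $K^{1/2}$ factor in (c). The residual from the gap between $V(\cdot;w^*)$ and $V_\theta$, averaged over the batch, gives term (g).

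\emph{FM error.} The critic sees $\phi_{\Xi_k}(\Psi)$ rather than an ideal next-state representation. The $L_V$-Lipschitz property of $V_w$ in its input converts the embedding errors into value errors. I will split the embedding error into an estimation part (finite pretraining data) and an approximation part (model class), giving terms (e) and (f), each scaled by $K^{1/2}$ after aggregation over agents.

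The main obstacle is the critic step. Because only rewards are exchanged, each agent's target $\overline{\mathbf{R}}_{[k,b]}$ is biased, and the agent cannot rely on parameter averaging to cancel errors. The plan is to treat the consensus bias as a bounded additive perturbation of the TD semi-gradient. Its contribution then enters the projected-SGD drift only linearly and can be absorbed into term (b), instead of compounding through the nonlinear network. I will verify that the projection radius $B$ and the width condition $m=\Omega(d^3M_d^{-11/2})$ keep the neural tangent linearization valid uniformly over $n$. Once that holds, collecting the terms in the unrolled recursion gives \eqref{eq:theorem}.
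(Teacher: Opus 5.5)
Your decomposition follows the paper's sketch. You set up a recursion in the optimality gap, driven by the error of the actor's update direction. You split that error into four contributions: critic optimization, reward consensus (entering directly through $\delta_{k,b}$ and through the critic's targets), FM representation, and $\epsilon_\text{bias}$. The concrete failure is in the consensus step. Your estimate with exponent $\lfloor G/(K-1)\rfloor$ is correct, but for $K\ge 3$ it exceeds $(1-\nu^{K-1})^G$ by the factor $(1-\nu^{K-1})^{-(G-\lfloor G/(K-1)\rfloor)}$. That factor grows exponentially in $G$, so your estimate does not give term (b). The loss comes from using $1-\nu^{K-1}$ as the Dobrushin coefficient $\delta_{\mathrm{D}}(P)=1-\min_{i,j}\sum_l\min\{P_{il},P_{jl}\}$ of $\mathbf{C}^{K-1}$. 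Every entry of $\mathbf{C}^{K-1}$ is at least $\nu^{K-1}$, so any two rows overlap in mass at least $K\nu^{K-1}$. Hence $\delta_{\mathrm{D}}(\mathbf{C}^{K-1})\le 1-K\nu^{K-1}\le(1-\nu^{K-1})^{K-1}$ by Bernoulli's inequality. Submultiplicativity then gives $\delta_{\mathrm{D}}(\mathbf{C}^{G})\le(1-\nu^{K-1})^{G-K+2}$. The leftover factor $(1-\nu^{K-1})^{-(K-2)}$ is at most $4/3$ because $\nu\le 1/2$. Since $\mathbf{C}^G$ is doubly stochastic, $|(\mathbf{C}^G x)_k-\bar x|\le\delta_{\mathrm{D}}(\mathbf{C}^G)(\max_j x_j-\min_j x_j)$ with $\bar x=\frac1K\sum_j x_j$. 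Combining these recovers the per-round rate in (b).

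Two further terms do not follow from the mechanisms you describe.

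For (g), $\epsilon_{\text{critic}}$ is a uniform bias, and averaging over the batch removes only zero-mean fluctuations. The current-state part of $V(\cdot;w^*)-V_\theta$ multiplies $\nabla\log\pi_{\theta_k}$ as a baseline. It has zero conditional mean, so it does average out at rate $N_B^{-1/2}$. The next-state part $\gamma(V(\cdot;w^*)-V_\theta)(s')$, however, is correlated with $a_k$ through $P$. It contributes a bias of size up to $\gamma M_g\epsilon_{\text{critic}}$ that does not shrink with $N_B$. As written, your route gives $\mathcal{O}(K^{1/2}\epsilon_{\text{critic}})$, and the $N_B^{-1/2}$ in (g) needs a separate justification.

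In the critic step, treating the consensus error $e$ as an additive perturbation produces a cross term $2\beta e\langle\nabla_w V_{\bar w}(x),\bar w-w^*\rangle$ in the drift of $\|\bar w-w^*\|^2$. If you bound it through the projection radius, by $4\beta|e|L_W B$, it contributes $\mathcal{O}(|e|B)$ to the mean-squared value error. That is $\mathcal{O}(\sqrt{|e|B})$ in the value error itself, which halves the exponent in (b) instead of being absorbed into it. To keep the dependence linear, use the linearization $\langle\nabla_w V_{\bar w}(x),\bar w-w^*\rangle\approx V_{\bar w}(x)-V_{w^*}(x)$. Then apply Young's inequality against the $(1-\gamma)\,\mathbb{E}[(V_{\bar w}-V_{w^*})^2]$ descent term. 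This leaves only $\mathcal{O}(e^2/(1-\gamma)^2)$ in the mean-squared error.
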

\textit{Proof Sketch}. To bound the suboptimality gap $J(\theta^*) - J(\theta_{N_T})$, we focus on analyzing the discrepancy between the update direction $d_n$ and the true policy gradient $\nabla J(\theta_n)$ at each iteration.
We decompose this discrepancy into four components: (i) value function approximation error, (ii) actor update error induced by imperfect reward consensus, (iii) the mismatch between TD error and the advantage function, and (iv) intrinsic representation error of the function approximators.
Each component is bounded individually to obtain the terms in~\eqref{eq:theorem}.
In particular, the value function approximation error is further decomposed into contributions from imperfect critic updates, consensus inaccuracies, and limitations of the pretrained FM representations.
These terms are controlled using the stated assumptions and standard properties of overparameterized DNNs.
Due to space limitation, we relegate the full proof to our online technical report \cite{proof}
\qed

Theorem~\ref{thm:main} provides several key insights.
{\bf First}, term~$(b)$ shows that imperfect reward consensus induces an optimality gap of the same order as approaches that exchange full critic parameters and TD errors~\cite{Zhang25}.
This indicates that restricting communication to scalar reward exchange preserves performance while significantly improving practicality and reducing communication overhead.
{\bf Second}, term~$(c)$ highlights the role of network architecture.
The convergence bound exhibits stronger sensitivity to the network depth than to its width, suggesting that increasing depth is more effective for improving performance, whereas overparameterization in width yields diminishing returns.
{\bf Third}, terms~$(e)-(g)$ characterize the impact of critic approximation errors, which scale with the number of agents $K$.
In particular, terms~$(e)$ and~$(f)$ correspond to estimation and approximation errors arising from the self-supervised pretraining of the FM, where the learning objective is forward dynamics prediction.
These error components are unique to our framework and reflect the quality of the pretrained, reward-agnostic representations used by the critic.

Theorem~\ref{thm:main} shows that Algorithm~\ref{alg:main} converges to a neighborhood of the global optimum at a rate of $\mathcal{O}(1/N_T)$.
This indicates that operating under nonlinear function approximation and decentralized reward consensus matches the convergence rate of state-of-the-art actor–critic methods with linear function approximation and full critic exchange~\cite{Hairi22}.
Note that our result generalizes~\cite{Oh2025}, which only considered linear approximation, to a more realistic deep learning setting.
Moreover, unlike prior work that guarantees convergence only to stationary points (e.g., in terms of $\|\nabla J(\theta)\|^2$)~\cite{Hairi22,Oh2025}, our analysis characterizes convergence toward {\bf global optimality}.

\begin{figure*}[!t]
    \centering
    \begin{subfigure}{0.32\linewidth}
        \includegraphics[width=1\linewidth]{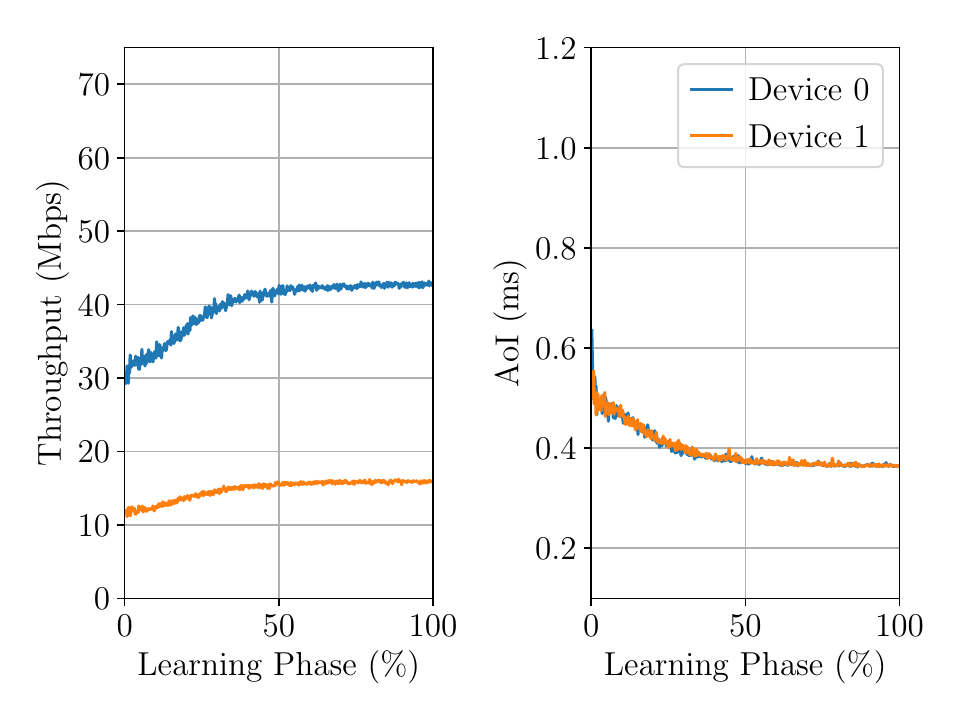}
        \vspace{-.25in}
        \subcaption{Fair-AoI task.}
        \label{fig:fair_AoI}
    \end{subfigure}
    \begin{subfigure}{0.32\linewidth}
        \includegraphics[width=1\linewidth]{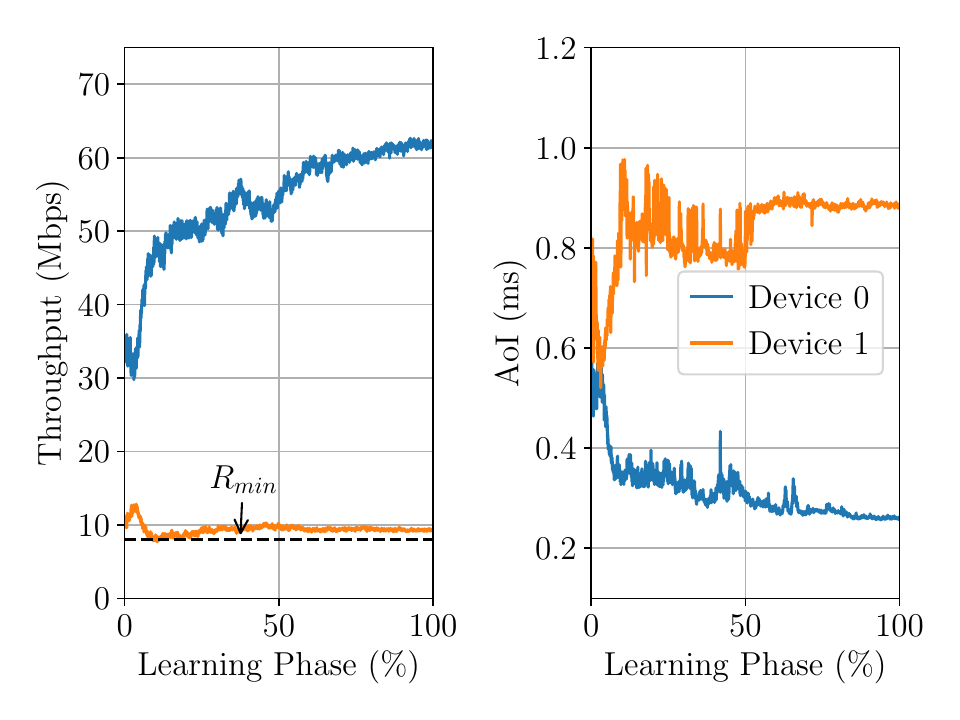}
        \vspace{-.25in}
        \subcaption{Max-rate task.}
        \label{fig:max_rate}
    \end{subfigure}
    \begin{subfigure}{0.32\linewidth}
        \includegraphics[width=1\linewidth]{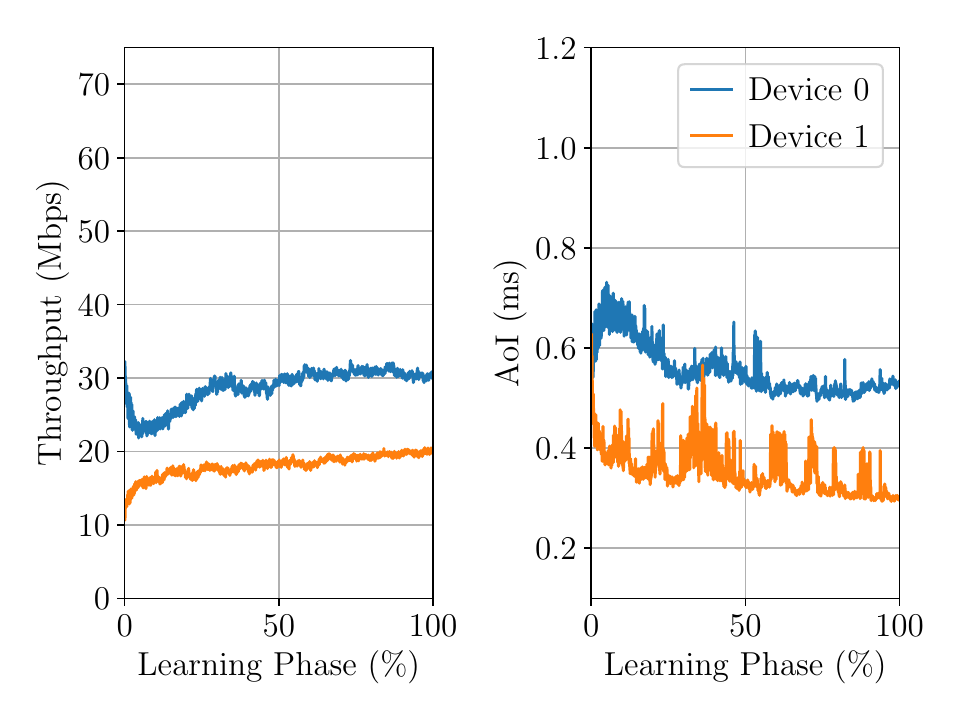}
        \vspace{-.25in}
        \subcaption{Fair-rate task.}
        \label{fig:fair_rate}
    \end{subfigure}
    \vspace{-3mm}
    \caption{Throughput and AoI under end-to-end learning with $K=2$ devices across different RA optimization tasks.}
    \label{fig:downstream_tasks}
    \vspace{-4mm}
\end{figure*}

\vspace{-3mm}
\section{Numerical Evaluation}\label{sec:numerical}

\subsection{Experimental Settings}\label{ssec:numerical_setting}

We consider operating a wireless RA-based network over $T=600$ time slots.
As described in Section~\ref{sec:system}, all devices operate under the LBT mechanism.
Following the IEEE 802.11 protocol, the SIFS and DIFS durations are set to $2$ and $4$ time slots, respectively, with each slot lasting $T_\mathsf{s} = 9$ $\mu s$.
Each data packet transmission requires $\tau = 10$ time slots, while ACK signals occupy $4$ time slots.

We set the number of AP antennas to $N_\mathsf{r}=2$, the bandwidth to $B=10$ MHz, and the transmit power to $p_k=0$ dBm, with the noise spectral density at $-174$ dBm/Hz.
For a given number of devices $K$, device-to-AP distances are uniformly distributed between $2$ m and $14$ m.
For example, with $K=4$, the device distances are $\{d_{k}\}_{k=1}^{K=4}=\{2, 6, 10, 14\}$.
The large-scale fading factor $\eta_{k}$ is modeled as $\eta_{k}=L_\mathsf{o}\left(\frac{d_{k}}{d_\mathsf{o}}\right)^{z}$, where $L_\mathsf{o}=-40$ dB is the reference pathloss, $d_\mathsf{o}=1$ m is the reference distance, and $z=4$ is the pathloss exponent.

The network graph $\mathcal{G}$ is generated using the Watts-Strogatz model~\cite{Watts98}, where each device connects to one neighboring device with zero rewiring probability.
To generate the consensus weight matrix $\mathbf{C}$, we assign equal weights are assigned to each device’s established links, i.e., for each device $k\in\mathcal{K}$, $c_{kk'}=\frac{1}{\vert\mathcal{N}_k\vert+1},\forall k'\in\mathcal{N}_k$, where $\mathcal{N}_k$ denotes the neighbor set of device $k$.
Each consensus step consists of $G=3$ communication rounds.

For the critic-backbone FM, we set the embedding size to $N_\mathsf{e}=64$ such that the encoders, transformer blocks, and prediction head all generate embeddings of dimension $64$.
Both the state and action encoders are implemented as two-layer MLPs with width $64$.
The FM consists of two transformer blocks, each containing a self-attention layer with 8 heads and an MLP of width $4N_\mathsf{e}$ and depth $1$.
The prediction head is a linear transformation of size $2N_\mathsf{e}\times N_\mathsf{e}$ with bias.
No dropout is applied in the FM design.

Each device’s FM is pretrained using more than $100,000$ reward-independent RA-based wireless network samples, each consisting of a tuple $\{t,s_t,a_{k,t}\}$.
We employ stochastic gradient descent (SGD) with a rate of $0.005$, applying $500$ updates with a batch size $256$.
The target encoder is updated using a soft update with a rate of $0.005$.

For both actor and reward-specific critic head, we adopt an MLP block of width $M_w=128$ and depth $M_d=3$.
The actor and critic weights are updated using the SGD optimizer, with learning rates $\alpha=0.001$ and $\beta=0.002$.
Discount factor is set to $\gamma=0.9$.
For the end-to-end training baseline, we employ an MLP block of width $M_w=128$ and depth $M_d=7$ for both actor and critic to ensure a comparable number of parameters for fair evaluation.
Since the state dimension grows proportionally with $K$, we adjust the history length $H$ for different values $K$ to stabilize the input dimension $d$ of the actor–critic network against the FM embedding size $N_e$.

For the MDP parameters, we set $\lambda=60$, $\omega_\mathsf{FA}=15$, $\omega_\mathsf{MR}=24\times10^6$, $\omega_\mathsf{P1}=0.5$, $\omega_\mathsf{P2}=8$, $\omega_\mathsf{FR1}=256\times10^6$, $\omega_\mathsf{FR2}=0.7$, and $\alpha_\mathsf{FR}=12$ for parameter scaling.
These values are heuristically selected to bound the reward to a reasonable range.
All results are averaged over at least $20$ independent runs.

As performance metrics on RA-based wireless networks, we evaluate throughput (in Mbps) and AoI (in ms).
In addition, we assess fairness across devices using the normalized gap (N-Gap) defined as $\text{N-Gap of }\{x\}=(\max\{x\}-\min\{x\})/{\max\{x\}}$.

\begin{figure}[!t]
    \centering
    \begin{subfigure}{0.49\linewidth}
        \includegraphics[width=1\linewidth]{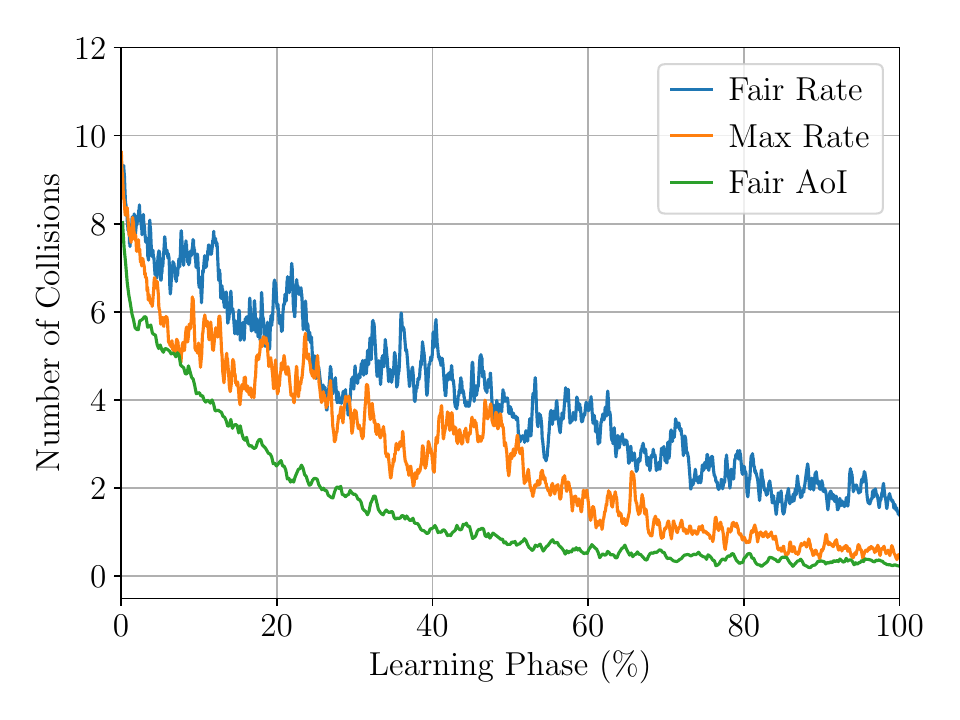}
        \subcaption{Number of Collisions.}
        \label{fig:2_collision}
    \end{subfigure}
    \begin{subfigure}{0.49\linewidth}
        \includegraphics[width=1\linewidth]{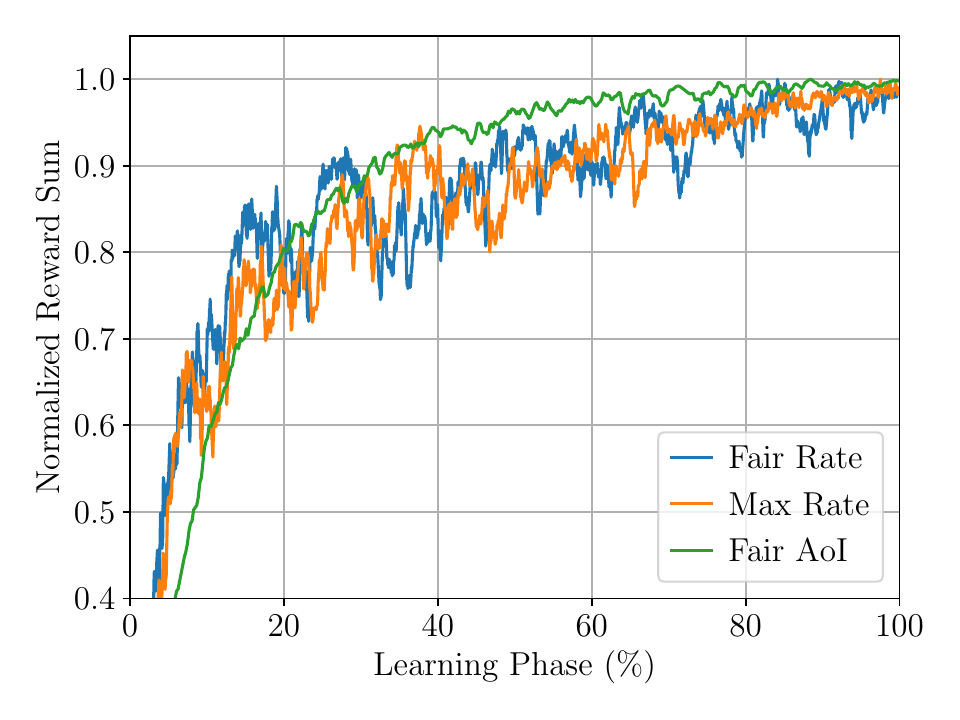}
        \subcaption{Normalized reward sum.}
        \label{fig:2_reward}
    \end{subfigure}
    \vspace{-3mm}
    \caption{Collision and reward performance under end-to-end learning with $K=2$ devices across RA optimization tasks.}
    \Description{.}
    \label{fig:2_others}
    \vspace{-6mm}
\end{figure}

\vspace{-2mm}
\subsection{Experimental Results and Discussion}\label{ssec:numerical_analysis}

\textbf{1) Performance Analysis of the Proposed Decentralized MARL Algorithm on Wireless RA Network Optimization Tasks:}
We first validate Algorithm~\ref{alg:main} and the MDP formulation introduced in Section~\ref{ssec:framework_MDP} by evaluating end-to-end learning performance~\cite{Oh2025}.
In Figure~\ref{fig:downstream_tasks}, we present the throughput and AoI achieved during the learning phase for three downstream tasks.
For ease of illustration, we consider $K=2$ devices. 
The results demonstrate strong learning performance across all tasks.
As expected, the fair-AoI task (Figure~\ref{fig:fair_AoI}) successfully balances AoI across devices.
The max-rate task (Figure~\ref{fig:max_rate}) effectively maximizes the throughput of one device while maintaining the other near the minimum rate constraint $R_\mathsf{min}$.
The fair-rate task (Figure~\ref{fig:fair_rate}) is able to jointly improve throughput of both devices while preserving the fairness.
These observations confirm that the proposed MDP formulation is capable of supporting diverse RA network optimization objectives.

As additional performance metrics, Figures~\ref{fig:2_collision} and~\ref{fig:2_reward} show the number of collisions and the normalized reward sum, respectively.
For improved readability, all results are smoothed using a moving average over $5$ samples.
The results in Figure~\ref{fig:2_collision} show that the fair-AoI task achieves the fastest reduction in collisions, indicating that balancing AoI is the least complex among the three objectives.
This is intuitive, as the fair-AoI objective does not require consideration of channel conditions, which directly influence device data rates.
By contrast, the fair-rate task exhibits the highest collision counts even after many episodes.
This is because its reward function imposes stronger penalties for rate unfairness across devices.
Consequently, the policy that reduces collisions emerges later in training, consistent with Figure~\ref{fig:fair_rate} where device rates increase gradually while maintaining fairness gaps.
In Figure~\ref{fig:2_reward}, we observe a steady increase in the normalized reward sum over the learning process.
This trend validates the effectiveness of our reward functions and confirms that the algorithm successfully optimizes the objectives.

\begin{figure*}[!ht]
    \centering
    \begin{subfigure}{0.32\linewidth}
        \includegraphics[width=1\linewidth]{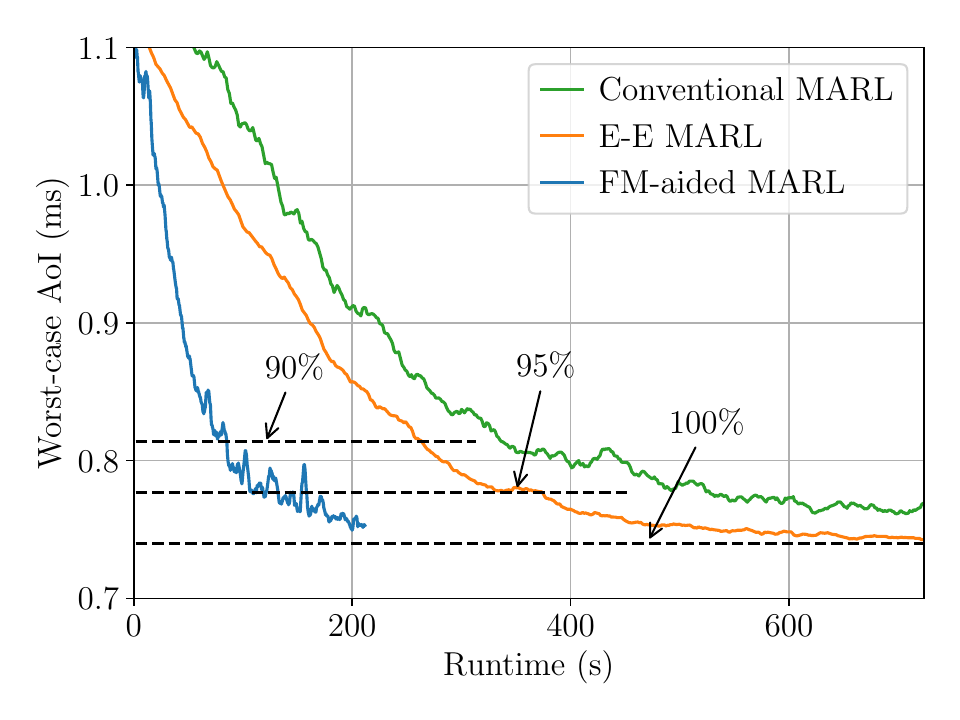}
        \subcaption{Fair-AoI task.}
        \label{fig:4_fair_AoI}
    \end{subfigure}
    \begin{subfigure}{0.32\linewidth}
        \includegraphics[width=1\linewidth]{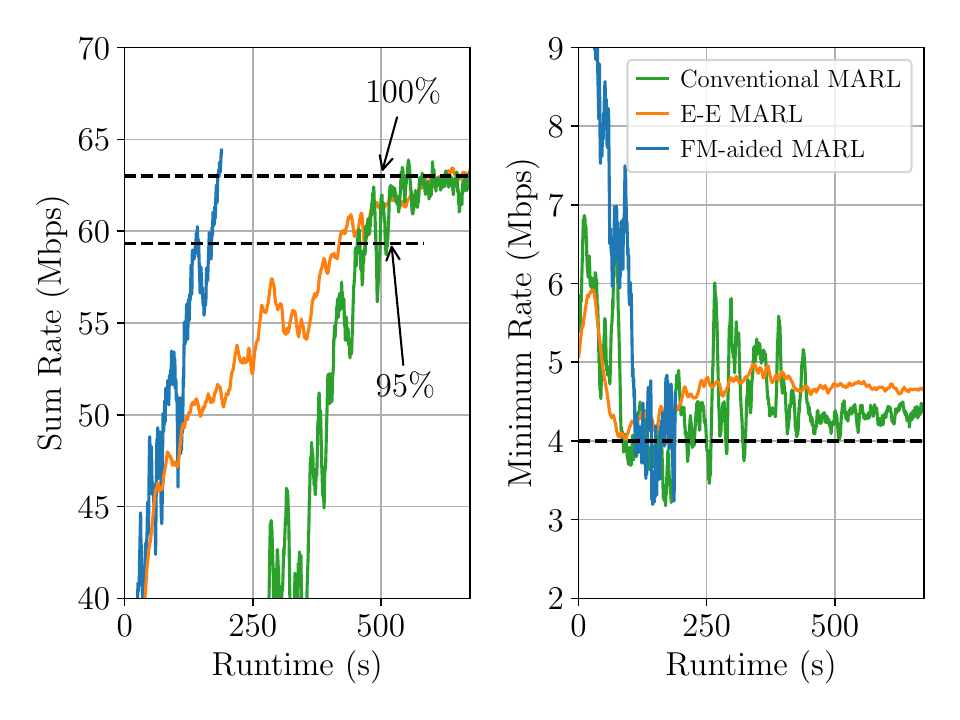}
        \subcaption{Max-rate task.}
        \label{fig:4_max_rate}
    \end{subfigure}
    \begin{subfigure}{0.32\linewidth}
        \includegraphics[width=1\linewidth]{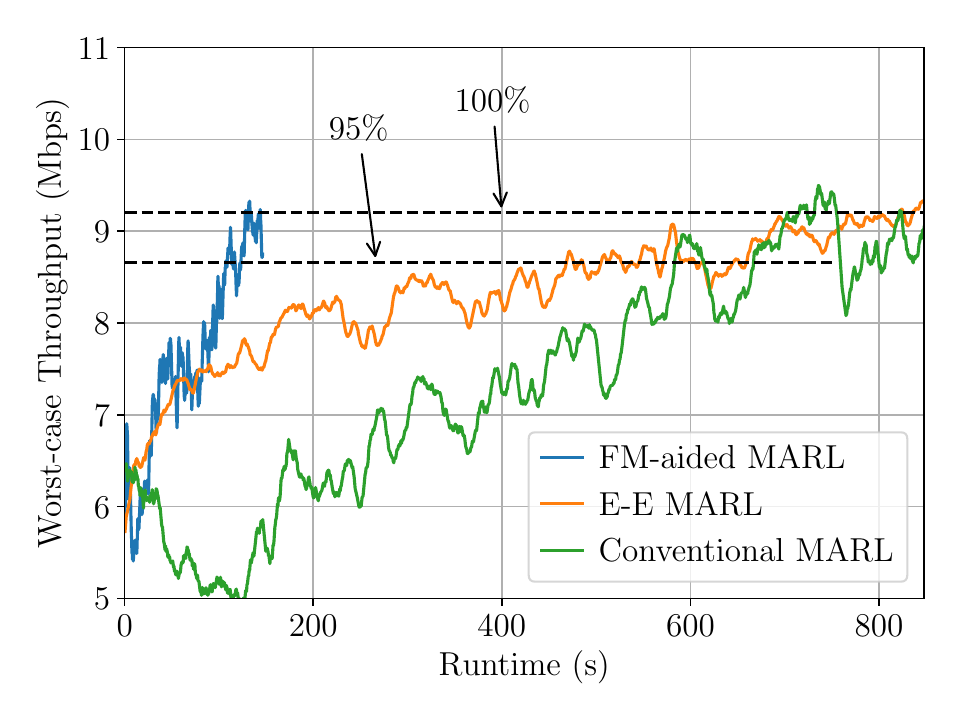}
        \subcaption{Fair-rate task.}
        \label{fig:4_fair_rate}
    \end{subfigure}
    \vspace{-3mm}
    \caption{Performance comparison of conventional, end-to-end (E-E), and FM-aided MARL across different RA optimization tasks with $K=4$. The FM-aided MARL achieves faster convergence than the end-to-end baseline.}
    \Description{.}
    \label{fig:4_comparison}
    \vspace{-2mm}
\end{figure*}

\begin{figure}[!t]
    \centering
    \includegraphics[width=1\linewidth]{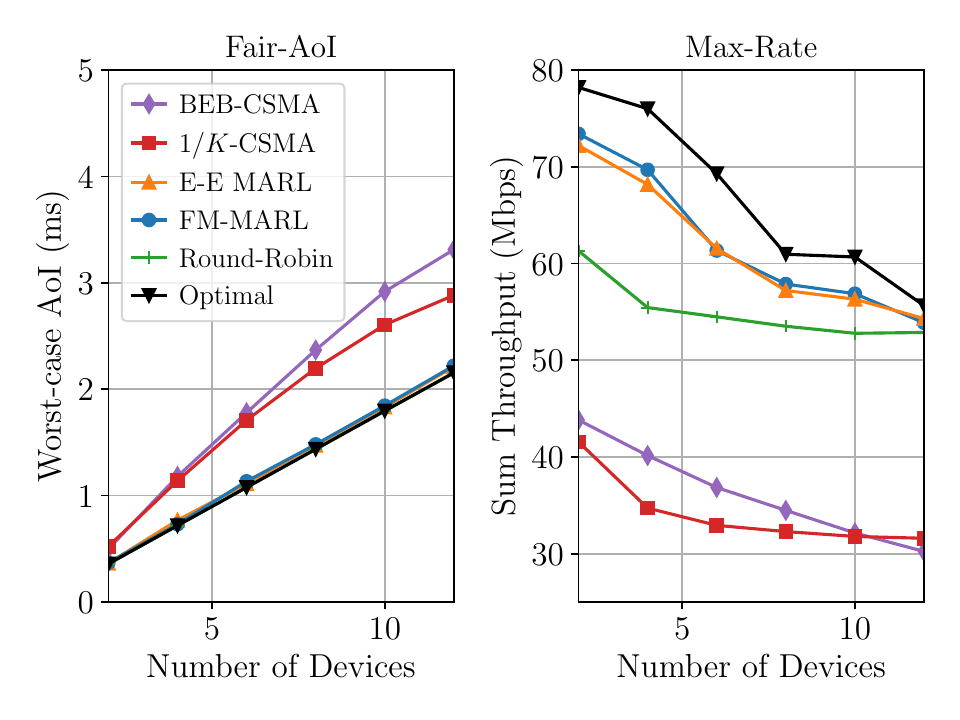}
    \vspace{-10mm}
    \caption{Fair-AoI and Max-rate performance comparison with non-MARL baselines across the number of devices $K$.}
    \Description{.}
    \label{fig:device_scale}
    \vspace{-3mm}
\end{figure}

\textbf{2) Performance Comparison of FM-Aided MARL and Baselines:}
With the end-to-end (E-E) MARL results as a proof of concepts, we are now ready to assess the effectiveness of our FM-aided MARL algorithm by comparing it with E-E MARL and conventional decentralized MARL (conventional MARL), which performs full critic exchanges~\cite{Zhang25}.
Figure~\ref{fig:4_comparison} presents the AoI and throughput over runtime for each downstream task with $K=4$.
For all tasks, FM-aided learning shows superior learning speed against both baselines, approximately improving the convergence speed by at least 55\%.
Note that the performance gap with E-E MARL is attributed to the integration of the FM as a reward-agnostic critic backbone.

In Figure~\ref{fig:device_scale}, we compare our algorithm with several baselines, which include $1/K$-persistent CSMA ($1/K$-CSMA) and CSMA/CA with binary exponential backoff (BEB-CSMA), under the Fair-AoI and Max-rate objectives for different numbers of devices.
We observe that both E-E MARL and FM-MARL consistently outperform the CSMA-based baselines, which rely on stochastic channel-access mechanisms.
Round-robin, where devices are scheduled in turn through centralized control, achieves near-optimal performance for the Fair-AoI objective because it provides balanced access opportunities across devices.
However, its performance degrades under the Max-rate objective, since equal transmission allocation does not account for differences in the achievable data rates of individual devices.
In contrast, our proposed method, along with end-to-end training, learns to prioritize devices with more favorable channel conditions, resulting in performance closer to the optimal.

Lastly, we provide a summary of performance comparison in Table~\ref{tab:performance}.
We observe that the algorithms achieve comparable final rate and AoI performance.
However, the proposed FM-aided MARL approach arrives at its $95\%$-optimal performance level {\em substantially faster} than the end-to-end training approach.

\begin{table}[!t]
    \centering
    \caption{Performance Comparison of End-to-End (E-E) and FM-aided MARL for $K=4$ (top) and $K=8$ (bottom). AoI is measured for the fair-AoI task, and rate is measured for both max-rate and fair-rate tasks.}
    \setlength{\tabcolsep}{2.5pt}
    \begin{tabular}{cc|cccc|c}
        \hline
        Task & Alg. & Mean & Min. & Max. & N-Gap & 95\%-T (s) \\
        \hline
        Fair-AoI & E-E & 0.731 $\pm$ 0.010 & 0.726 & 0.734 & 0.011 & 335 \\
        (ms) & FM & 0.769 $\pm$ 0.032 & 0.757 & 0.772 & 0.019 & 108 \\
        \hline
        Max-rate & E-E & 17.044 $\pm$ 0.266 & 4.092 & 41.225 & 0.901 & 380\\
        (Mbps) & FM & 17.924 $\pm$ 0.345 & 3.918 & 43.865 & 0.911  & 159 \\
        \hline
        Fair-rate & E-E & 9.049 $\pm$ 0.186 & 6.903 & 11.956 & 0.423 & 373 \\
        (Mbps) & FM & 8.083 $\pm$ 0.131 & 6.143 & 11.590 & 0.463 & 104 \\
        \hline
    \end{tabular}
    \label{tab:performance}
    \vspace{-1mm}
\end{table}

\begin{table}[!t]
    \centering
    \setlength{\tabcolsep}{2.5pt}
    \begin{tabular}{cc|cccc|c}
        \hline
        Task & Alg. & Mean & Min. & Max. & N-Gap & 95\%-T (s) \\
        \hline
        Fair AoI & E-E & 0.366 $\pm$ 0.014 & 0.364 & 0.368 & 0.003 & 723 \\
        (ms) & FM & 0.368 $\pm$ 0.025 & 0.365 & 0.370 & 0.013 & 231 \\
        \hline
        Max Rate & E-E & 7.235 $\pm$ 0.375 & 3.895 & 30.175 & 0.871 & 812\\
        (Mbps) & FM & 7.152 $\pm$ 0.236 & 4.100 & 29.685 & 0.862  & 345 \\
        \hline
        Fair Rate & E-E & 6.365 $\pm$ 0.352 & 5.851 & 6.864 & 0.148 & 768 \\
        (Mbps) & FM & 6.335 $\pm$ 0.244 & 5.642 & 7.025 & 0.197 & 225 \\
        \hline
    \end{tabular}
    \label{tab:performance2}
    \vspace{-0.1in}
\end{table}

\section{Conclusion}\label{sec:conclusion}

In this work, we proposed an FM-aided decentralized MARL framework for RA network optimization.
By employing an FM as a reward-agnostic critic backbone, our algorithm effectively leverages self-supervised knowledge on network dynamics and improves learning efficiency.
Compared with end-to-end training, the FM-aided approach achieves faster convergence, demonstrating its effectiveness in optimizing throughput and AoI across diverse objectives.
Future work includes extending this framework to more complex and sophisticated wireless RA scenarios with larger action spaces (e.g., considering transmit power control).

\begin{acks}
    This work was supported in part by the National Science Foundation (NSF) under Grant CNS2110259, Grant CNS2112471, and Grant ECCS233104; in part by the Office of Naval Research (ONR) under Grant N000142412729; in part by the Defense Advanced Research Projects Agency (DARPA) under Grant D24AP00265, Grant HR00112520019, and Grant HR0011261E001; and in part by Meta Research Award INB3267726.
\end{acks}

\bibliographystyle{ACM-Reference-Format}
\bibliography{mybib}

\end{document}